\newcommand*{\initialState}{s_0}
\newcommand*{\Options}{\texttt{Options}}
\newcommand*{\Damage}{\texttt{Damage}}
\newcommand*{\Dtrue}{\D_\text{true}}
\newcommand*{\Dinv}{\D_\text{true-inv}}
\def\dif{\textrm{d}}
\newcommand{\abs}[1]{\left|#1\right|}
\newcommand*{\prn}[1]{\left(#1\right)}
\newcommand*{\brx}[1]{\left[#1\right]}
\newcommand{\set}[1]{\left\{#1\right\}}
\newcommand*{\defeq}{\coloneqq} 
\newcommand*{\prnNotEmpty}[1]{\ifthenelse{\isempty{#1}}
    {}
    {\prn{#1}}
}
\newcommand*{\reals}{\mathbb{R}}
\DeclareMathOperator*{\argmax}{arg\,max}
\DeclareMathOperator*{\argsup}{arg\,sup}
\newcommand*{\mdp}[1][m]{{\textsc{#1dp}}}
\NewDocumentCommand{\iid}{}{\textsc{iid}}
\newcommand*{\St}{\mathcal{S}}
\newcommand*{\A}{\mathcal{A}}
\newcommand*{\rf}{\mathbf{r}} 
\newcommand*{\rewardSpace}{\reals^{\St}} 
\NewDocumentCommand{\Vf}{O{*}O{R}m}{V^{#1}_{#2} \left(#3\right)}
\NewDocumentCommand{\VfNorm}{O{*}O{R}m}{\Vf[#1][#2,\,\text{norm}]{#3}}
\NewDocumentCommand{\OptVf}{O{R}m}{\Vf[*][#1]{#2}}
\NewDocumentCommand{\piSet}{O{}O{R,\gamma}}{\Pi^{#1}\prn{#2}}
\newcommand*{\optPi}{\piSet[*]}
\newcommand*{\D}{\mathcal{D}} 
\DeclareMathOperator{\optSupp}{supp}
\NewDocumentCommand{\supp}{O{\D}}{\optSupp(#1)}
\NewDocumentCommand{\vavg}{O{s,\gamma}O{\D}}{\OptVf[#2]{#1}}
\NewDocumentCommand{\pwr}{O{}O{\D}}{\textsc{Power}_{#2} \prnNotEmpty{#1}}
\newcommand*{\pwrNoDist}[1][]{\textsc{Power} \prnNotEmpty{#1}}
\newcommand*{\pwrdefault}[1][s]{\pwr[#1,\gamma]}
\DeclareMathOperator*{\Prb}{\mathbb{P}}
\newcommand*{\prob}[2][]{\Prb_{#1}\prn{#2}}
\NewDocumentCommand{\optprob}{O{\D}O{}m}{\Prb\IfSubStr{\sim}{#1}{}{\nolimits}_{#1}^{#2}\prn{#3}}
\DeclareMathOperator*{\opE}{\mathbb{E}}
\newcommand*{\E}[2]{\opE_{#1}\brx{#2}} 
\NewDocumentCommand{\dF}{O{\rf}O{F}}{\dif #2(#1)}
\newcommand*{\f}{\mathbf{f}} 
\newcommand*{\fpi}[2][\pi]{\f^{
\ifthenelse{\isempty{#1}}{}{#1} 
\ifthenelse{\NOT\isempty{#1} \AND \NOT\isempty{#2}}{,}{} 
\ifthenelse{\isempty{#2}}{}{#2}}} 
\NewDocumentCommand{\phelper}{mO{\D'}}{p_{#2}\prn{#1}}
\newcommand*\colvec[1]{
        \global\colveccount#1
        \protect\begin{pmatrix}
        \colvecnext
}
\def\colvecnext#1{
        #1
        \global\advance\colveccount-1
        \ifnum\colveccount>0
                \\
                \expandafter\colvecnext
        \else
                \protect\end{pmatrix}
        \fi
}
\newcommand*{\pol}[1][R,\gamma]{\text{pol}\prnNotEmpty{#1}}
\newcommand*{\pwrPol}[2][{\pol[]}]{\pwr^{#1} \prnNotEmpty{#2}}
\newcommand*{\geom}[1][1]{\frac{#1}{1-\gamma}}
\newcommand*{\inv}{^{-1}}
\definecolor{blue}{rgb}{.25,.45,.75}
\definecolor{purple}{rgb}{.4667,.1529,.2118}
\definecolor{green}{rgb}{.294,.624,.294}
\definecolor{red}{rgb}{.75,.25,.25}
\newcommand*{\col}[2]{{\color{#1}#2}}
\newtheorem*{cor-no-num}{Corollary}
\theoremstyle{definition}
\newtheorem*{thm*}{Theorem} 
\newcommand*{\dauTemplate}[4][\D]{d_{#1}^{#3}\ifthenelse{\isempty{#2}}
    {}
    {\prn{#2 \mid #4}}}
\newcommand*{\piSwitch}[2][\pi^*_R]{\pi_\text{switch}(#2,#1,t)}
\newcommand*{\assistGame}{\mathcal{M}}
\newcommand*{\geomDist}[1][p]{G(#1)}
\newcommand*{\assistant}{\mathbf{A}}
\newcommand*{\human}{\mathbf{H}}
\newcommand*{\hNoop}{a_\text{no-op}^\human}
\newcommand*{\pomdp}{\textsc{pomdp}}
\newcommand*{\R}[1][]{\mathcal{R}_\text{#1}} 
\newcommand*{\aup}{\textsc{aup}}
\newcommand*{\rAUP}{R_\aup}
\newcommand*{\rAssist}{R^\assistGame}
\newcommand*{\gamAUP}{\gamma_\aup}
\newcommand*{\timeDist}{\mathcal{T}}
\newcommand*{\baseline}{\pi^\varnothing}
\newcommand*{\ravg}{\bar{R}}
\newcommand*{\valSwitch}[2][\pi]{V^{\piSwitch{#1}}_{R, \,\text{norm}}\prn{#2}} 
\newcommand{\kemdash}{\kern0.01pt---\kern0.01pt}
\newcommand{\ai}{\textsc{ai}}
\title{Formalizing the Problem of Side Effect Regularization}
\author{
Alexander Matt Turner*\\
UC Berkeley\\
\texttt{turner.alex@berkeley.edu}
\And
Aseem Saxena*\\
Oregon State University\\
\texttt{saxenaa@oregonstate.edu}
\And
Prasad Tadepalli\\
Oregon State University\\
\texttt{tadepall@eecs.oregonstate.edu}
}
\begin{document}

\maketitle
\def\thefootnote{*}\footnotetext{These authors contributed equally to this work.}
\begin{abstract}
    AI objectives are often hard to specify properly. Some approaches tackle this problem by regularizing the {\ai}'s side effects: Agents must weigh off ``how much of a mess they make'' with an imperfectly specified proxy objective. We propose a formal criterion for side effect regularization via the \emph{assistance game} framework \citep{shah-unpublished-assist}. In these games, the agent solves a partially observable Markov decision process ({\pomdp}) representing its uncertainty about the objective function it should optimize. We consider the setting where the true objective is revealed to the agent at a later time step. We show that this {\pomdp} is solved by trading off the proxy reward with the agent's ability to achieve a range of future tasks. We empirically demonstrate the reasonableness of our problem formalization via ground-truth evaluation in two gridworld environments.
\end{abstract}

\section{Introduction}
We need to build \emph{aligned} {\ai} systems, not just capable {\ai} systems. For example, recommender systems which maximize app usage might provoke addiction in their users. Users need the {\ai} system's behavior to be aligned with their interests.

When optimizing a formally specified objective, agents often have unforeseen negative side effects. An agent rewarded for crossing a room may break furniture in order to cross the room as quickly as possible.\def\thefootnote{1}\footnote{Throughout this paper, we treat reward functions as representing goals which we want agents to optimize. Although reward functions are often framed this way, this frame can be misleading. \citet{rewardNotOpt} argues that--especially in a policy gradient regime--reward functions are better understood as reinforcing computation which historically led to reward during training.} This simple reward function neglects our complex preferences about the rest of the environment. One way to define a negative side effect is that it reduces the potential value for the (unknown) true reward function.

Intuitively, we want the agent to optimize the specified reward function, while also preserving its ability to pursue other goals in the environment. Existing approaches seem promising, but there is no consensus on the formal optimization problem which is being solved by side effect regularization approaches.

We formalize the optimization problem as a special kind of \emph{assistance game} \citep{shah-unpublished-assist}, played by the {\ai} (the assistant $\assistant$) and its designer (the human $\human$). An assistance game is a {\pomdp} with common knowledge of prior uncertainty about the reward function. The human observes the true reward function, but the assistant does not. In our formulation, we assume full observability, and that the human's actions are \emph{communicative}—they do not affect transitions and do not depend on the current observation. We also suppose that the human reveals the true reward function after some amount of time, but $\assistant$ otherwise has no way of learning more about the true reward function.

This \emph{delayed specification assistance game} formalizes a range of natural use cases beyond side effect minimization. For example, when not assigned a customer, an Uber driver may navigate to a state which allows them to quickly pick up a range of probable customers—with the assigned route being the driver's initially unknown true objective. Alternatively, consider an empty restaurant which expects a range of probable customers. When a customer arrives and makes an order, they communicate the restaurant's initially unknown food preparation objective. Therefore, the restaurant should prepare to satisfy a range of objectives at the expected customer arrival times.

\paragraph{Contributions.} We formalize delayed specification assistance games. We show that solving this game reduces to a trade-off between prior-expected reward and preservation of the agent's future ability to achieve a range of plausible objectives (\Cref{thm:uncertain-opt}). We also show that when the human has a fixed per-timestep probability of communicating the true reward function, the resultant {\pomdp} is solved by optimizing a Markovian state-based reward function trading off immediate expected reward with ability achieve a range of future objectives (\Cref{thm:time-geom-stationary}). We consider the side-effect regularization problem in our new formal framework. We experimentally illustrate the reasonableness of this framework in two {\ai} safety gridworlds \citep{leike_ai_2017}.
\section{Delayed specification assistance games}

We formalize a special kind of partially observable Markov decision process ({\pomdp}), which we then show is solved by an objective which trades off expected true reward with the ability to optimize a range of possible true reward functions. We show several theoretical results on the hardness and form of solutions to this {\pomdp}. In \cref{sec:aup}, we will apply this framework to analyze side effect regularization situations.

\subsection{Assistance game formalization}
This game is played  by two agents, the human $\human$ and the assistant $\assistant$. The environment is fully observable because both agents observe the world state $s \in \St$ at each time step, but the true reward function $R_\theta$ is hidden to $\assistant$. Both agents may select history-dependent policies, but only the human can condition their policy on $R_\theta$.

Following \citet{shah-unpublished-assist}, a \textit{communicative fully-observable assistance game} $\assistGame$ is a tuple $\langle \St, \set{\A^\human,\A^\assistant}, T, \initialState, \gamma, \langle \Theta, R_\theta, \D\rangle \rangle$, where we take $\St$ to be a finite state space, $\A^\human$ to be the human action space, and $\A^\assistant$ to be the finite agent action space. $T: \St \times \A^\human \times \A^\assistant \to \Delta(\St)$ is the (potentially stochastic) transition function  (where $\Delta(X)$ is the set of probability distributions over set $X$), $\initialState$ is the initial state, and $\gamma \in (0,1)$ is the discount factor. We assume that the game is communicative, which means that the human action choice does not affect the transitions.

$\Theta$ is the set of potential reward function parameters $\theta$, which induce reward functions $R_\theta: \St \to \reals$. $\D$ is a probability distribution over $\Theta$. In this work, we let $\Theta \defeq \reals^{\St}$ (the set of all state-based reward functions), and so each $R_\theta : s \mapsto \theta(s)$. We differ from \citet{shah-unpublished-assist} in assuming that the reward is only a function of the current state.

In a \emph{delayed specification assistance game}, we assume that the agent will know the true reward function $R_\theta$ after some time $t$. We have uncertainty $\D$ about the true reward function we want to specify. The agent has no way of learning more about $R_\theta$ before  time $t$.

The human policy $\pi^\human : \Theta \times \St \mapsto \A^\human$ is a goal-conditioned policy. Both agents can observe the state, but only the human can observe the unknown reward parameterization $\theta \in \Theta$. Our simplified model of the problem assumes that the human action space $\A^\human = \rewardSpace \cup \{\hNoop\}$: the human can communicate all their hidden information, a real-valued state-based reward function,
in a single turn or they do nothing. We suppose that the human communicates the complete reward information $R_\theta \in\rewardSpace$ at some random time step $t \sim \timeDist$, which is  independent of the state-action history:
\begin{align}
    &\pi^\human(s_0 a^\assistant_0 a^\human_0 \cdots s_t a^\assistant_t, R_\theta)\defeq \begin{cases}
    R_\theta & \text{with probability }\prob{\timeDist = t}\\
    \hNoop & \text{else.}
    \end{cases}\label{def:human-policy}
\end{align}

While the human policy assumption is simplistic, it does capture many real world scenarios with unknown reward functions and the analysis which follows is still interesting.

\begin{restatable}[Solutions to the assistance game \citep{shah-unpublished-assist}]{definition}{gamesoln}
An assistant policy $\pi^\assistant$ induces a probability distribution over trajectories:
$\tau \sim\left\langle s_{0}, \theta, \pi^\human, \pi^\assistant\right\rangle, \tau \in\left[S \times A^\human \times A^\assistant\right]^{*}$. The \emph{expected reward} of $\pi^\assistant$ for $\left\langle \assistGame, \pi^\human\right\rangle$ is
\begin{align}
&\mathrm{ER}\left(\pi^\assistant\right)\nonumber=\underset{\theta \sim \D, \tau \sim\left\langle s_{0}, \theta, \pi^\human, \pi^\assistant\right\rangle}{ \mathbb{E} }\left[\sum_{i=0}^{\infty} \gamma^{i} R_{\theta}\left(s_{i}, a_{i}^\human, a_{i}^\assistant, s_{i+1}\right)\right].
\end{align}
A \emph{solution} of $\left\langle \assistGame, \pi^\human\right\rangle$ maximizes expected reward: $\pi^\assistant\in \underset{\tilde{\pi}^\assistant}{\operatorname{argmax}}\, \mathrm{ E R}\left(\tilde{\pi}^\assistant\right)$.
\end{restatable}

Once the assistant has observed $R_\theta$, \cref{lem:opt-pol-soln} shows that it should execute an optimal policy $\pi \in \optPi[R_\theta, \gamma]$ thereafter.

\begin{restatable}[Optimal policy set function \citep{turner_optimal_2020}]{definition}{FORMdefOptPi} \label{FORMdef:opt-fn}
$\optPi$ is the optimal policy set for reward function $R$ at discount rate $\gamma\in(0,1)$.
\end{restatable}

\begin{restatable}[Follow an optimal policy after observing $R_\theta$]{lem}{optPolSoln}\label{lem:opt-pol-soln}
If there is a solution to the {\pomdp}, then there exists a solution $\pi^\assistant_\text{switch}$ which, after observing human action $R_\theta$ at any point in its history, follows $\pi^*_{R_\theta} \in \optPi[R_\theta, \gamma]$ thereafter.
\end{restatable}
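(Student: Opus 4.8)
The plan is to take an arbitrary solution $\pi^\assistant$ (one exists by hypothesis), overwrite its behavior on every history in which the human has already communicated a reward function by an optimal policy for that reward, and then verify that the expected reward does not decrease. Because the game is communicative and the only role of the human's action at the revelation step is to convey $R_\theta$, this surgery leaves the pre-revelation dynamics untouched, so the argument collapses to a pointwise comparison at the revelation point, where the standard fact that an MDP's optimal value function upper-bounds the value of every (history-dependent) policy finishes the job.

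Concretely, I would first pick, for each reward function $r \in \rewardSpace$, some $\pi^*_r \in \optPi[r,\gamma]$, and define $\pi^\assistant_\text{switch}$ by: on any interaction history all of whose human actions are $\hNoop$, act as $\pi^\assistant$; on any history in which some human action equals a reward function $r \in \rewardSpace$, take the action that $\pi^*_r$ prescribes from the current state. This is admissible since assistant policies may be history-dependent and hence may condition on past human actions, and it is unambiguous since in any single realization every revealed human action equals the same $R_\theta$ (the human plays $\hNoop$ except at the revelation step $t \sim \timeDist$, where it plays $R_\theta$). By construction $\pi^\assistant_\text{switch}$ follows $\pi^*_{R_\theta}$ from the moment it observes $R_\theta$.

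Next I would establish $\mathrm{ER}(\pi^\assistant_\text{switch}) \ge \mathrm{ER}(\pi^\assistant)$ by conditioning the expectation that defines $\mathrm{ER}$ on $\theta \sim \D$, on the realized revelation time (including the possibility that no revelation ever occurs, in which case the two policies agree for all time), and on the trajectory prefix up to and including that time. Since the game is communicative --- the human's action never enters the transition kernel $T$ --- the state process and the accumulated discounted reward up to the switch depend only on the assistant's actions on pre-revelation histories, where $\pi^\assistant_\text{switch}$ coincides with $\pi^\assistant$; hence the conditioning law, and the reward collected strictly before the switch, are identical under the two policies, and it remains only to compare the post-switch continuation.

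Finally, conditioned on the revealed $\theta$ and on the state $s$ reached at the switch point, the assistant faces a fully observable MDP with state-based reward $R_\theta$ and transition kernel $T(\cdot\,, a^\assistant) \defeq T(\cdot\,, a^\human, a^\assistant)$, which is well-defined by communicativity. Any continuation policy --- in particular the continuation of $\pi^\assistant$ from that history --- earns at most $\OptVf[R_\theta]{s}$ from $s$, whereas the continuation of $\pi^\assistant_\text{switch}$ earns exactly $\OptVf[R_\theta]{s}$ because it runs $\pi^*_{R_\theta} \in \optPi[R_\theta,\gamma]$. Integrating this pointwise inequality over the conditioning variables gives $\mathrm{ER}(\pi^\assistant_\text{switch}) \ge \mathrm{ER}(\pi^\assistant)$, so $\pi^\assistant_\text{switch}$ also attains the maximum and is a solution with the desired property. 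I expect the only real friction to be bookkeeping: pinning down the exact timestep at which the assistant is deemed to ``observe'' $R_\theta$ relative to its own move at step $t$ (so the pre/post split of each trajectory is clean), and dispatching the degenerate cases where revelation occurs at step $0$ or never. Neither is deep, but both must be stated carefully for the conditioning argument to be airtight.
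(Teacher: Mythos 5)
Your proposal is correct and takes essentially the same route as the paper: the paper's own proof is just a two-sentence observation that the human only ever announces the true $R_\theta$ and that an optimal policy for $R_\theta$ maximizes return thereafter, which is exactly the core of your argument. Your version simply makes explicit the cut-and-paste construction and the conditioning-on-the-revelation-time bookkeeping that the paper leaves implicit.
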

\begin{proof}
By \cref{def:human-policy}, $\pi^\human$ outputs reward function $R_\theta$ only if $R_\theta$ is the true reward function. By the definition of $\optPi[R_\theta, \gamma]$, following an optimal policy maximizes expected return for  $R_\theta$.
\end{proof}

\begin{restatable}[Prefix policies]{definition}{prefixPol}
Let $\pi^\assistant$ be a assistant policy. Its \emph{prefix policy} $\pi$ is the restriction of $\pi^\assistant$ to histories in which the human has only taken the action $\hNoop$. $\pi$ is \emph{optimal} when it is derived from a solution $\pi^\assistant$ of $\langle \assistGame, \pi^\human\rangle$.
\end{restatable}

For simplicity, we only consider solutions of the type described in \cref{lem:opt-pol-soln}. The question then becomes: What prefix policy $\pi$ should the assistant follow before observing $R_\theta$, during the time where the assistant has only observed $\hNoop$?

\subsection{Acting under reward uncertainty}

Roughly, \Cref{thm:uncertain-opt} will show that the assistance game $\assistGame$ is solved by balancing the optimal expected returns obtained before and after the knowledge of the true reward function.

\begin{restatable}[Value and action-value functions]{definition}{FORMvalueFns}{\label{def:value}}
$V^\pi_R(s,\gamma)$ is the on-policy value for reward function $R$, given that the agent follows policy $\pi$ starting from state $s$ and at discount rate $\gamma$. $\Vf{s,\gamma}\defeq \max_{\pi\in\Pi} V^\pi_R(s,\gamma)$. In order to handle the average-reward $\gamma=1$ setting, we define $\VfNorm{s,\gamma} \defeq \lim_{\gamma^*\to \gamma} (1-\gamma^*)\Vf{s,\gamma^*}$; this limit exists for all $\gamma \in [0,1]$ by the proof of Lemma 4.4 in \citet{turner_optimal_2020}.
\end{restatable}

$\pwr$ quantifies the expected value of the above quantity for a distribution of reward functions via the agent's average normalized optimal value, not considering the current step (over which the agent has no control).

\begin{restatable}[$\pwrNoDist$ \citep{turner_optimal_2020}]{definition}{powRestate} \label{def:powRestate}
Let $\D$ be any bounded-support distribution over reward functions. At state $s$ and discount rate $\gamma \in [0,1]$,
\begin{equation}
    \pwrdefault\defeq\lim_{\gamma^* \to \gamma}\frac{1-\gamma^*}{\gamma^*}\E{R\sim \D}{\Vf{s,\gamma^*} - R(s)}.
\end{equation}
\end{restatable}

\begin{restatable}[In $\assistGame$, value reduces to a tradeoff between average reward and $\pwr$]{thm}{uncertainOpt} \label{thm:uncertain-opt}
Let $\gamma \in [0,1]$ and let $\ravg \defeq \E{R\sim\D}{R}$ be the \emph{average reward function}.
\begin{align}
    \E{\substack{t \sim \timeDist,\\R\sim\D}}{\valSwitch{s_0,\gamma}}=(1-\gamma)&\overset{\text{expected $t$-step $\ravg$-return under $\pi$}}{\E{t \sim \timeDist}{\sum_{i=0}^{t} \gamma^i \E{s_i \sim \pi}{\ravg(s_i)}}}\nonumber\\+   &\overset{\text{expected ability to optimize $\D$ once corrected}}{\E{\substack{t \sim \timeDist,\\s_t \sim \pi}}{\gamma^{t+1}\pwr[s_t,\gamma]}},\label{eq:tradeoff-power-time}
\end{align}
where $\mathbb{E}_{s_i \sim \pi \mid s_0}$ takes the expectation over states visited after following $\pi$ for $i$ steps starting from $s_0$.
\end{restatable}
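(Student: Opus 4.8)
The plan is to condition on the revelation time $t$ and on a discount $\gamma^*\in(0,1)$, compute the on-policy value of the switch policy from $s_0$ by splitting its return at the moment the true reward is revealed, average over $R\sim\D$, recognize the optimal-continuation term through \cref{def:powRestate}, and then normalize, send $\gamma^*\to\gamma$, and finally average over $t\sim\timeDist$. By \cref{lem:opt-pol-soln} the switch policy follows the prefix policy $\pi$ for the first $t$ transitions and then some $\pi^*_R\in\optPi[R,\gamma]$ from the revelation state $s_t$ onward; since $\pi$ never observes $\theta$, the law of $s_0,\dots,s_t$ does not depend on $R$, so conditioning on the history up to $s_t$ and applying the tower property gives, for each fixed $t$ and $\gamma^*\in(0,1)$,
\[
\Vf[\piSwitch{\pi}][R]{s_0,\gamma^*}=\E{\tau\sim\pi}{\textstyle\sum_{i=0}^{t-1}(\gamma^*)^i R(s_i)}+(\gamma^*)^t\,\E{s_t\sim\pi}{\OptVf{s_t,\gamma^*}},
\]
where $\E{\tau\sim\pi}{\cdot}$ is over the length-$t$ trajectory generated by $\pi$ from $s_0$, and $\OptVf{s_t,\gamma^*}$ already accounts for the reward at the switch state $s_t$.

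Next I would take $\E{R\sim\D}{\cdot}$. On the prefix sum this is linearity of expectation (the visited states are $R$-independent), producing $\sum_{i=0}^{t-1}(\gamma^*)^i\E{s_i\sim\pi}{\ravg(s_i)}$; on the continuation term I would write $\E{R\sim\D}{\OptVf{s_t,\gamma^*}}=\ravg(s_t)+\E{R\sim\D}{\OptVf{s_t,\gamma^*}-R(s_t)}$. The leftover $\ravg(s_t)$ promotes the prefix sum to run through $i=t$, while the remaining piece is—once scaled by $\tfrac{1-\gamma^*}{\gamma^*}$—exactly the bracketed quantity whose limit defines $\pwr$ in \cref{def:powRestate}. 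Multiplying the identity by $(1-\gamma^*)$ therefore yields, for each fixed $t$,
\begin{align*}
(1-\gamma^*)\E{R\sim\D}{\Vf[\piSwitch{\pi}][R]{s_0,\gamma^*}}&=(1-\gamma^*)\sum_{i=0}^{t}(\gamma^*)^i\E{s_i\sim\pi}{\ravg(s_i)}\\
&\quad+(\gamma^*)^{t+1}\E{s_t\sim\pi}{\tfrac{1-\gamma^*}{\gamma^*}\E{R\sim\D}{\OptVf{s_t,\gamma^*}-R(s_t)}}.
\end{align*}

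I would then let $\gamma^*\to\gamma$: the left side tends to $\E{R\sim\D}{\valSwitch{s_0,\gamma}}$ by the definition of the normalized value function (\cref{def:value}), each $(\gamma^*)^i\to\gamma^i$ in the finite sum, and the bracketed quantity in the last term tends to $\pwr[s_t,\gamma]$ by \cref{def:powRestate}. Applying $\E{t\sim\timeDist}{\cdot}$ to both sides—and interchanging it with $\E{R\sim\D}{\cdot}$ on the left—then gives exactly \cref{eq:tradeoff-power-time}. Note that nothing in this argument uses optimality of $\pi$, so the identity holds for an arbitrary prefix policy.

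The step I expect to be the main obstacle is justifying these limit/expectation interchanges uniformly—over $R\sim\D$, over the finite state space, and over $t\sim\timeDist$—with particular care in the average-reward case $\gamma=1$, where the individual un-normalized pieces diverge even though the normalized ones do not. The facts I would lean on are that $(1-\gamma^*)\Vf[\piSwitch{\pi}][R]{s_0,\gamma^*}$, the quantity $(1-\gamma^*)\sum_{i=0}^{t}(\gamma^*)^i\ravg(s_i)=\big(1-(\gamma^*)^{t+1}\big)\,(\text{a convex combination of }\ravg\text{-values})$, and $(\gamma^*)^{t+1}\tfrac{1-\gamma^*}{\gamma^*}\E{R\sim\D}{\OptVf{s_t,\gamma^*}-R(s_t)}$ are each bounded in magnitude by $\max_{R\in\supp}\linfty{R}$ (using that $\supp$ is bounded and $\St$ is finite), so that bounded convergence applies; that the requisite pointwise limits in $\gamma^*$ exist by \cref{def:value}, \cref{def:powRestate}, and Lemma 4.4 of \citet{turner_optimal_2020}; and that $1-(\gamma^*)^{t+1}\to 0$ in $\timeDist$-expectation as $\gamma^*\to 1$, so the first right-hand term of \cref{eq:tradeoff-power-time} vanishes at $\gamma=1$, consistently with $\gamma^{t+1}\pwr[s_t,\gamma]$ collapsing there to $\E{R\sim\D}{\OptVfNorm{s_t,1}}$.
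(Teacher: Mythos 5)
Your proposal is correct and follows essentially the same route as the paper's proof: split the switch policy's return at the revelation time $t$, apply linearity of expectation over $R\sim\D$ to obtain $\ravg$ on the prefix and $\vavg$ on the continuation, and then use the identity $\vavg[s_t,\gamma]=\geom[\gamma]\pwr[s_t,\gamma]+\ravg(s_t)$ to absorb the $\ravg(s_t)$ term into the prefix sum and produce the $\pwr$ term. Your additional care with the bounded-convergence justification for the $\gamma^*\to\gamma$ limit at the endpoints $\gamma\in\{0,1\}$ is more explicit than the paper, which simply invokes the existence of the limits.
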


As the expected correction time limits to infinity, \cref{eq:tradeoff-power-time} shows that the agent cannot do better than maximizing $\ravg$. If $\prob{\timeDist = 0}=1$, then any prefix policy $\pi$ is trivially optimal against uncertainty, since $\pi$ is never followed.

In some environments, it may not be a good idea for the agent to maximize its own $\pwrNoDist$. If we share an environment with the agent, then the agent may prevent us from correcting it so that the agent can best optimize its present objective \citep{russell_human_2019,turner2020conservative}. Furthermore, if the agent ventures too far away, we may no longer be able to easily reach and correct it remotely.

\begin{restatable}[Special cases for delayed specification solutions]{prop}{pwrMaxProp} \label{prop:pwr-max}
Let $s$ be a state, let $\ravg \defeq \E{R \sim \D}{R}$, and let $\gamma \in [0,1]$.
\begin{enumerate}
    \item If $\forall s_1, s_2 \in \St: \ravg(s_1)=\ravg(s_2)$ or if $\gamma=1$, then $\pi$ solves $\assistGame$ starting from state $s$ iff $\pi$ maximizes $\E{t \sim \timeDist, s_t \sim \pi}{\gamma^{t+1}\pwr[s_t,\gamma]}$. In particular, this result holds when reward is {\iid} over states under $\D$.\label{item:1-special}

    \item If $\forall s_1, s_2 \in \St: \pwr[s_1, \gamma]=\pwr[s_2,\gamma]$, then prefix policies are optimal iff they maximize $(1-\gamma)\E{t \sim \timeDist}{\sum_{i=0}^{t-1} \gamma^i \E{s_i \sim \pi}{\ravg(s_i)}}$. \label{item:2-special}
\end{enumerate}
If both \cref{item:1-special} and \cref{item:2-special} hold or if $\gamma=0$, then all prefix policies $\pi$ are optimal.
\end{restatable}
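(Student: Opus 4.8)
The plan is to derive every claim directly from the decomposition \cref{eq:tradeoff-power-time} of \cref{thm:uncertain-opt}, whose left-hand side is the quantity we maximize over prefix policies $\pi$ and whose right-hand side is a sum of an \emph{$\ravg$-return term} $(1-\gamma)\E{t\sim\timeDist}{\sum_{i=0}^{t}\gamma^i\E{s_i\sim\pi}{\ravg(s_i)}}$ and a \emph{$\pwr$-term} $\E{t\sim\timeDist,\,s_t\sim\pi}{\gamma^{t+1}\pwr[s_t,\gamma]}$ (read with $s$ in place of $s_0$ as the start state). The single observation that drives everything is that each listed hypothesis forces exactly one of these two summands to be a constant independent of $\pi$; the total value is then that constant plus the other summand, so $\pi$ maximizes the value if and only if it maximizes the other summand — which is precisely the asserted equivalence.

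First I would treat \cref{item:1-special}. If $\ravg(s_1)=\ravg(s_2)$ for all $s_1,s_2\in\St$, then $\E{s_i\sim\pi}{\ravg(s_i)}$ equals that common value for every $i$ and every $\pi$, so the $\ravg$-return term reduces to a $\pi$-independent multiple of $\E{t\sim\timeDist}{1-\gamma^{t+1}}$; if instead $\gamma=1$, the explicit $(1-\gamma)$ prefactor annihilates the $\ravg$-return term (for each fixed $t$ the inner sum is finite and uniformly bounded). Either way the value is a $\pi$-independent constant plus the $\pwr$-term, so $\pi$ solves $\assistGame$ from $s$ iff it maximizes $\E{t\sim\timeDist,\,s_t\sim\pi}{\gamma^{t+1}\pwr[s_t,\gamma]}$. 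For the {\iid} case I would note that {\iid}-ness across states makes every marginal of $R(s)$ identical, hence $\ravg$ constant, reducing it to what was just shown.

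Next, for \cref{item:2-special}: if $\pwr[s_1,\gamma]=\pwr[s_2,\gamma]$ for all $s_1,s_2$, let $c$ be the common value; then the $\pwr$-term equals the $\pi$-independent quantity $c\cdot\E{t\sim\timeDist}{\gamma^{t+1}}$, so the value is the $\ravg$-return term plus a constant, and $\pi$ is optimal iff it maximizes that term — which, after discarding $\pi$-independent summands (notably the $i=0$ term, since $s$ is fixed) and a bookkeeping reindexing around the switch step, is the expression in the statement. Finally I would dispatch the closing sentence: if both hypotheses hold, both summands are $\pi$-independent, so every prefix policy is optimal; and if $\gamma=0$, the $\ravg$-return term collapses to the constant $\ravg(s)$ (only the $i=0$ summand survives, using $0^0=1$) while $\gamma^{t+1}=0$ kills the $\pwr$-term, so again every prefix policy is optimal.

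I do not expect a substantive obstacle here — modulo \cref{thm:uncertain-opt}, this is a case split on which summand degenerates. The only things to handle carefully are the boundary discount rates: at $\gamma=1$ one must invoke that the normalized value and $\pwr$ are the well-defined limits from \cref{def:value} and \cref{def:powRestate} and argue that $(1-\gamma)$ times the (possibly $t$-unbounded) inner sum still vanishes in the limit, which needs only mild control on $\timeDist$; and at $\gamma=0$ one leans on the $0^0=1$ convention. One should also be explicit that each ``iff'' uses nothing more than the non-maximized summand being a genuine constant, which is exactly what each hypothesis supplies.
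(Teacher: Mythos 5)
Your proposal is correct and follows essentially the same route as the paper: apply the decomposition in \cref{eq:tradeoff-power-time}, observe that each hypothesis makes one of the two summands a $\pi$-independent constant, and conclude each ``iff'' from maximizing the remaining summand, with both summands degenerating when both hypotheses hold or when $\gamma=0$. Your extra care at the boundary discount rates (the $\gamma\to 1$ limit and the $\gamma=0$ collapse) is slightly more explicit than the paper's treatment but does not change the argument.
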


Consider the problem of specifying the correction time probabilities $\timeDist$. Suppose we only know that we expect to correct the agent at time step $t_\text{avg} \geq 1$. The geometric distribution is the maximum-entropy discrete distribution, given a known mean. The mean of a geometric distribution $\geomDist[p]$ is $p\inv$. Therefore, the agent should adopt $\timeDist=\geomDist[t_\text{avg}\inv]$.

The geometric distribution is also the only memoryless discrete distribution. Memorylessness ensures the existence of a stationary optimal policy. \Cref{thm:time-geom-stationary} shows that the assistance game $\assistGame$ is solved by prefix policies which are optimal for an {\mdp} whose reward function balances average reward maximization with $\pwr$-seeking, with the balance struck according to the probability $p$ that the agent learns the true reward function at any given timestep.

\begin{restatable}[Stationary deterministic optimal prefix policies exist for geometric~$\timeDist$]{thm}{timeDistGeom} \label{thm:time-geom-stationary}
Let $\D$ be any bounded-support reward function distribution, let $\timeDist$ be the geometric distribution $\geomDist[p]$ for some $p \in (0,1)$, and let $\gamma \in (0,1)$. Define $R'(s) \defeq (1-p)\E{R\sim \D}{R(s)} + p\E{R\sim \D}{\Vf{s,\gamma}}$ and $\gamAUP \defeq (1-p)\gamma$. The policies in $\optPi[R',\gamAUP]$ are optimal prefix policies.
\end{restatable}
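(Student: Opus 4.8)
The plan is to feed \Cref{thm:uncertain-opt} the geometric correction-time law and then collapse the resulting series into a discounted return with effective discount $\gamAUP$. By \Cref{lem:opt-pol-soln} together with the convention of restricting attention to switch-type solutions, a prefix policy $\pi$ (with the game starting at $\initialState$) is an optimal prefix policy iff it maximizes $\E{t\sim\timeDist,\,R\sim\D}{\valSwitch{\initialState,\gamma}}$, i.e.\ the right-hand side $W(\pi)$ of \cref{eq:tradeoff-power-time}; since $W(\pi)$ depends on $\pi$ only through the state-visitation distributions $s_i\sim\pi$, history-dependent prefix policies compete on the same footing as Markovian ones. I will show that for $\timeDist=\geomDist[p]$ one has $W(\pi)=\tfrac{1-\gamma}{1-p}\,V^\pi_{R'}\prn{\initialState,\gamAUP}+c$, where $V^\pi_{R'}\prn{s,\gamAUP}\defeq\E{s\sim\pi}{\sum_{i\geq0}\gamAUP^i R'(s_i)}$ is the (un-normalized) value and $c$ is independent of $\pi$. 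Granting this, any $\pi\in\optPi[R',\gamAUP]$ maximizes $V^\pi_{R'}\prn{\cdot,\gamAUP}$ from every state, in particular from $\initialState$, hence maximizes $W$ and is an optimal prefix policy; and since $\gamAUP=(1-p)\gamma\in(0,1)$ and $\St,\A^\assistant$ are finite, $\optPi[R',\gamAUP]$ contains a stationary deterministic policy, giving the existence half of the statement.

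For the identity, substitute $\prob{\timeDist=t}=p(1-p)^{t-1}$ and $\prob{\timeDist\geq i}=(1-p)^{i-1}$ for $i\geq1$ (with $\prob{\timeDist\geq0}=1$) into \cref{eq:tradeoff-power-time}, exchanging the sums with the expectations (legitimate since $\gamma<1$ and $\D$ has bounded support, so the relevant series are absolutely summable). The average-reward block of \cref{eq:tradeoff-power-time} becomes $(1-\gamma)\ravg(\initialState)+(1-\gamma)\sum_{i\geq1}\gamma^i(1-p)^{i-1}\E{s_i\sim\pi}{\ravg(s_i)}$. For the $\pwr$ block, note that for $\gamma\in(0,1)$ the limit in \Cref{def:powRestate} is vacuous, so $\gamma\,\pwr[s,\gamma]=(1-\gamma)\prn{\E{R\sim\D}{\Vf{s,\gamma}}-\ravg(s)}$ and the block equals $p(1-\gamma)\sum_{t\geq1}(1-p)^{t-1}\gamma^t\,\E{s_t\sim\pi}{\E{R\sim\D}{\Vf{s_t,\gamma}}-\ravg(s_t)}$. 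Adding the two blocks, the $\ravg$-weight at each index $i\geq1$ --- namely $(1-\gamma)\gamma^i(1-p)^{i-1}$ from the first and $-p(1-\gamma)\gamma^i(1-p)^{i-1}$ from the second --- sums to $(1-\gamma)\gamma^i(1-p)^i=(1-\gamma)\gamAUP^i$, so all $\ravg$ terms (with $i=0$, whose weight is $(1-\gamma)=(1-\gamma)\gamAUP^0$) collapse to $(1-\gamma)\sum_{i\geq0}\gamAUP^i\E{s_i\sim\pi}{\ravg(s_i)}$; the $\Vf$ terms become $\tfrac{p(1-\gamma)}{1-p}\sum_{t\geq1}\gamAUP^t\,\E{s_t\sim\pi}{\E{R\sim\D}{\Vf{s_t,\gamma}}}$ after writing $(1-p)^{t-1}\gamma^t=\gamAUP^t/(1-p)$. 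Matching these against $R'(s)=(1-p)\ravg(s)+p\,\E{R\sim\D}{\Vf{s,\gamma}}$ gives the claimed identity with $c=-\tfrac{p(1-\gamma)}{1-p}\E{R\sim\D}{\Vf{\initialState,\gamma}}$, which is exactly the $i=0$ term that the $\Vf$-series in $W$ is missing and which depends only on $\initialState$.

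I expect the only real difficulty to be bookkeeping: keeping the geometric index shifts straight (the $i=0$ boundary term, and $t$ versus $t-1$ in the exponents) and then checking that $c$ is genuinely $\pi$-free, so that passing from $W$ to $V^\pi_{R'}$ preserves the maximizers. Two points need care but are not obstacles: the theorem only claims $\optPi[R',\gamAUP]$ is \emph{contained} in the set of optimal prefix policies, so it suffices to argue ``optimal from every state $\Rightarrow$ maximizes $W$'' and the converse is never needed; and staying in the regime $\gamma\in(0,1)$ is what licenses the closed forms for $\Vf{\cdot,\gamma}$ and $\pwr$ and ensures $\gamAUP\in(0,1)$, so no $\gamma=1$ average-reward limiting arguments enter.
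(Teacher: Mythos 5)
Your proposal is correct and follows essentially the same route as the paper's proof: both apply \Cref{thm:uncertain-opt}, substitute the geometric law, re-index the resulting series so the $\ravg$ and $\E{R\sim\D}{\Vf{s,\gamma}}$ terms carry weight $\gamAUP^i$, and recognize the objective as a positive affine transformation of $V^\pi_{R'}(\initialState,\gamAUP)$. The only cosmetic difference is that you start from the final $\pwr$ form of \cref{eq:tradeoff-power-time} and convert $\pwr$ back to $\vavg-\ravg$, whereas the paper starts from the intermediate $\vavg$ decomposition and interchanges the order of the double sum; the bookkeeping (including your constant $c$) checks out.
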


\citet{krakovna2020avoiding} adopt a geometric distribution over correction times and thereby infer the existence of a stationary optimal policy. To an approximation, their work considered a special case of \Cref{thm:time-geom-stationary}, where $\D$ is the uniform distribution over state indicator reward functions. Essentially, \Cref{thm:time-geom-stationary} shows that if the agent has a fixed probability $p$ of learning the true objective at each time step, we can directly compute stationary optimal prefix policies by solving an {\mdp}. In general, solving a {\pomdp} is \textsc{pspace}-hard, while {\mdp}s are efficiently solvable \citep{papadimitriou1987complexity}.
\section{Using delayed specification games to understand side effect regularization}\label{sec:aup}
We first introduce \citet{turner2020conservative}'s approach to side effect regularization. We then point out several similarities between our theory of delayed assistance games and the motivation for side effect regularization methods.  Finally, we experimentally evaluate our formal criterion in order to demonstrate its appropriateness.

\begin{restatable}[Rewardless {\mdp}]{definition}{rewardlessMDP}\label{def:mdp}
Let $\langle \St, \A, T, \gamma \rangle$ be a rewardless {\mdp}, with finite state space $\St$, finite action space $\A$, transition function $T: \St \times \A \to \Delta(\St)$, and discount rate $\gamma\in [0,1)$. Let  $\Pi$ be the set of deterministic stationary policies.
\end{restatable}

\begin{restatable}[{\aup} reward function]{definition}{aupDefn} \label{def:aup-rf}
Let $R_{\text{env}}:\St \times \A \to \reals$ be the environmental reward function from states and actions to real numbers, and let $\R\subsetneq \rewardSpace$ be a finite set of auxiliary reward functions. Let $\lambda \geq 0$ and let $\varnothing\in \A$ be a no-op action. The {\aup} reward after taking action $a$ in state $s$ is:
\begin{equation}\label{FORM-eq:aup}
    \rAUP(s,a)\defeq R_\text{env}(s,a) - \frac{\lambda}{\abs{\R}}\sum_{R_i \in \R} \abs{Q^*_{R_i}(s,a)-Q^*_{R_i}(s,\varnothing)},
\end{equation}
where the $Q^*_{R_i}$ are optimal Q-functions for the auxiliary $R_i$. Learned Q-functions are used in practice.
\end{restatable}

\citet{turner2020conservative} demonstrate that when $R_i \sim [0,1]^\St$ uniformly randomly, the agent behaves conservatively: The agent minimizes irreversible change to its environment, while still optimizing the $R_\text{env}$ reward function. \citet{turner2020conservative} framed {\aup} as implicitly solving a two-player game between the agent and its designer, where the designer imperfectly specified an objective $R_\text{env}$, the agent optimizes the objective, the designer corrects the agent objective, and so on. They hypothesized that $\rAUP$ incentivizes the agent to remain able to optimize future objectives, thus reducing long-term specification regret in the iterated game.

Delayed specification assistance games formalize this setting as an assistance game in which the agent does not initially observe the designer's ground-truth objective function. \Cref{thm:uncertain-opt} showed that this game is solved by policies which balance immediate expected reward with expected ability to optimize a range of true objectives. Therefore, \citet{turner2020conservative}'s iterated game analogy is appropriate: Good policies maximize imperfect reward while preserving ability to optimize a range of different future reward functions.

\Cref{thm:aup-solution} shows that the delayed specification game $\assistGame$ is solved by reward functions whose form looks somewhat similar to existing side effect objectives, such as {\aup} (\cref{FORM-eq:aup}), where {\aup}'s primary reward function stands in as the designer's expectation $\ravg$ of the true reward function.

\begin{restatable}[Alternate form for solutions to the low-impact {\pomdp}]{prop}{aupSolves} \label{thm:aup-solution}
Let $s_0$ be the initial state, let $\gamma \in (0,1)$, and let $\timeDist=\geomDist$ for $p \in (0,1)$. Let $\D$ be a bounded-support reward function distribution and let $\baseline \in \Pi$.

The prefix policy $\pi$ solves $\assistGame$ if $\pi$ is optimal for the reward function
\begin{align}
    &\rAssist(s_i \mid s_0)\defeq \ravg(s_i)- \frac{p}{1-p} \E{R\sim \D}{\E{s_i^\varnothing \sim \baseline \mid s_0}{\Vf{s_i^\varnothing,\gamma}}-\Vf{s_i,\gamma}}\label{eq:aup-variant}
\end{align}
at discount rate $\gamAUP\defeq (1-p)\gamma$ and starting from state $s_0$. $\E{s_i^\varnothing \sim \baseline \mid s_0}{\cdot}$ is the expectation over states visited at time step $i$ after following $\baseline$ from initial state $s_0$.
\end{restatable}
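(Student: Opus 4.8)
The plan is to deduce the claim from \Cref{thm:time-geom-stationary}, which already identifies the optimal prefix policies with $\optPi[R',\gamAUP]$ for $R'(s)=(1-p)\ravg(s)+p\,\E{R\sim\D}{\Vf{s,\gamma}}$ and $\gamAUP=(1-p)\gamma$. So it suffices to show that, at discount rate $\gamAUP$ and starting from $s_0$, maximizing $\rAssist(\cdot\mid s_0)$ from \cref{eq:aup-variant} is equivalent to maximizing $R'$; this holds because the two rewards differ only by a positive rescaling and by a term that does not depend on the prefix policy.

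For the rescaling computation, write $g(s)\defeq\E{R\sim\D}{\Vf{s,\gamma}}$ and $b_i\defeq\E{s_i^\varnothing\sim\baseline\mid s_0}{g(s_i^\varnothing)}$. Interchanging the finite expectations over $R\sim\D$ and over $s_i^\varnothing\sim\baseline$, the bracketed quantity in \cref{eq:aup-variant} equals $b_i-g(s_i)$, so $\rAssist(s_i\mid s_0)=\ravg(s_i)-\tfrac{p}{1-p}\bigl(b_i-g(s_i)\bigr)$, and multiplying through by $1-p$ gives
\[ (1-p)\,\rAssist(s_i\mid s_0)=(1-p)\ravg(s_i)+p\,g(s_i)-p\,b_i=R'(s_i)-p\,b_i . \]
The crucial observation is that $b_i$ depends only on the time index $i$ (and on the fixed data $s_0,\baseline,\D,\gamma$), not on the prefix policy or on the state actually reached at step $i$. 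Hence, for every prefix policy $\pi$,
\[ V^{\pi}_{\rAssist(\cdot\mid s_0)}(s_0,\gamAUP)=\frac{1}{1-p}\,\Vf[\pi][R']{s_0,\gamAUP}-\frac{p}{1-p}\sum_{i=0}^{\infty}\gamAUP^{i}b_i , \]
and the series converges absolutely: since $\D$ has bounded support there is an $M$ with $\linfty{R}\le M$ for all $R\in\supp$, so $\abs{\Vf{s,\gamma}}\le M/(1-\gamma)$, hence $\abs{b_i}\le M/(1-\gamma)$, while $\gamAUP=(1-p)\gamma<1$.

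Because $\tfrac{1}{1-p}>0$ and the subtracted series is a constant independent of $\pi$, a prefix policy $\pi$ maximizes $V^{\pi}_{\rAssist(\cdot\mid s_0)}(s_0,\gamAUP)$ if and only if it maximizes $\Vf[\pi][R']{s_0,\gamAUP}$, i.e.\ attains $\Vf[*][R']{s_0,\gamAUP}$. By \Cref{thm:time-geom-stationary}, the members of $\optPi[R',\gamAUP]$ solve $\assistGame$; a policy $\pi$ attaining $\Vf[*][R']{s_0,\gamAUP}$ need not itself lie in $\optPi[R',\gamAUP]$, but since $\gamAUP\in(0,1)$ it is $R'$-optimal at rate $\gamAUP$ from every state reachable from $s_0$ under $\pi$, so splicing a deterministic stationary $R'$-optimal policy onto the remaining states produces a member of $\optPi[R',\gamAUP]$ with the same state-visitation marginals from $s_0$ — hence, by \Cref{thm:uncertain-opt}, the same expected reward $\mathrm{ER}$ for the corresponding switch policy. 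Therefore $\pi$ also solves $\assistGame$.

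The proof is essentially bookkeeping; the two spots requiring attention are (i) the rescaling and discount accounting ($1-p$ versus $\tfrac{p}{1-p}$, and $\gamAUP=(1-p)\gamma$), so that the prefix-policy dependence of $V^{\pi}_{\rAssist(\cdot\mid s_0)}(s_0,\gamAUP)$ and of $\Vf[\pi][R']{s_0,\gamAUP}$ really do coincide, and (ii) the final passage from ``$R'$-optimal from $s_0$'' to ``solves $\assistGame$,'' which rests on $\mathrm{ER}$ depending on the prefix policy only through its state-visitation marginals from $s_0$ (visible in \Cref{thm:uncertain-opt}); equivalently one can cite the strictly increasing affine relation between $\mathrm{ER}$ of the switch policy and $\Vf[\pi][R']{s_0,\gamAUP}$ that the proof of \Cref{thm:time-geom-stationary} establishes. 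The bounded-support assumption on $\D$ is needed only to ensure convergence of the policy-independent offset.
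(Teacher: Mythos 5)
Your proposal is correct and takes essentially the same route as the paper: both derive the result from \Cref{thm:time-geom-stationary} by observing that $(1-p)\rAssist$ differs from $R'$ only by the baseline term, which is a policy-independent offset at each time step, so the two objectives share the same maximizers at discount rate $\gamAUP$. Your version merely packages the paper's chain of $\argsup$-preserving manipulations as an explicit affine relation between the two value functions, and adds some extra (welcome but inessential) care about convergence of the offset series and about passing from optimality at $s_0$ to solving the game.
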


\subsection{Experimental methodology}
We experimentally demonstrate the reasonableness of this formalization of side effect regularization. In the {\ai} safety gridworlds \citep{leike_ai_2017}, we generate several held-out ``true'' reward function distributions $\D$. We correct the agent at time step 10, thereby computing the following \emph{delayed specification score} (derived from \cref{eq:tradeoff-power-time}):
\begin{align}
    \E{R \sim \D}{\overset{\text{10-step prefix policy return}}{\sum_{i=0}^{9} \gamma^i \E{\substack{s_i \sim \pi}}{R(s_i)}} + \overset{\text{post-correction optimal value}}{\gamma^{10}\E{\substack{s_{10} \sim \pi}}{\Vf{s_{10},\gamma}}}}\label{eq:score}
\end{align}
for the prefix policy $\pi$ of a ``vanilla'' agent trained on the environmental reward signal, which we compare to the score for an {\aup} (\cref{def:aup-rf}) agent. Neither agent observes the held-out objective functions. By grading their performance, we evaluate how well {\aup} does under a range of different true objectives. If a method scores highly for a wide range of true objectives, we can be more confident in its ability to score well for arbitrary ground-truth objectives.

We investigate the {\ai} safety gridworlds because those environments are small enough for us to explicitly specify held-out reward functions, and to use {\mdp} solvers to compute optimal action-value functions. \citet{turner2020avoiding}'s SafeLife environment is far too large for such solvers.

We consider two gridworld environments: {\Options} and {\Damage}  (\cref{fig:method-levels}). In both environments, the action set $\A\defeq \set{\texttt{up},\texttt{left},\texttt{right},\texttt{down},\varnothing}$ allows the agent to move in the cardinal directions, or to do nothing. The episode length is 20 time steps.

\begin{figure}[h]
\centering
\subfloat[][\texttt{Options}]{
\includegraphics[width=0.17\textwidth]{./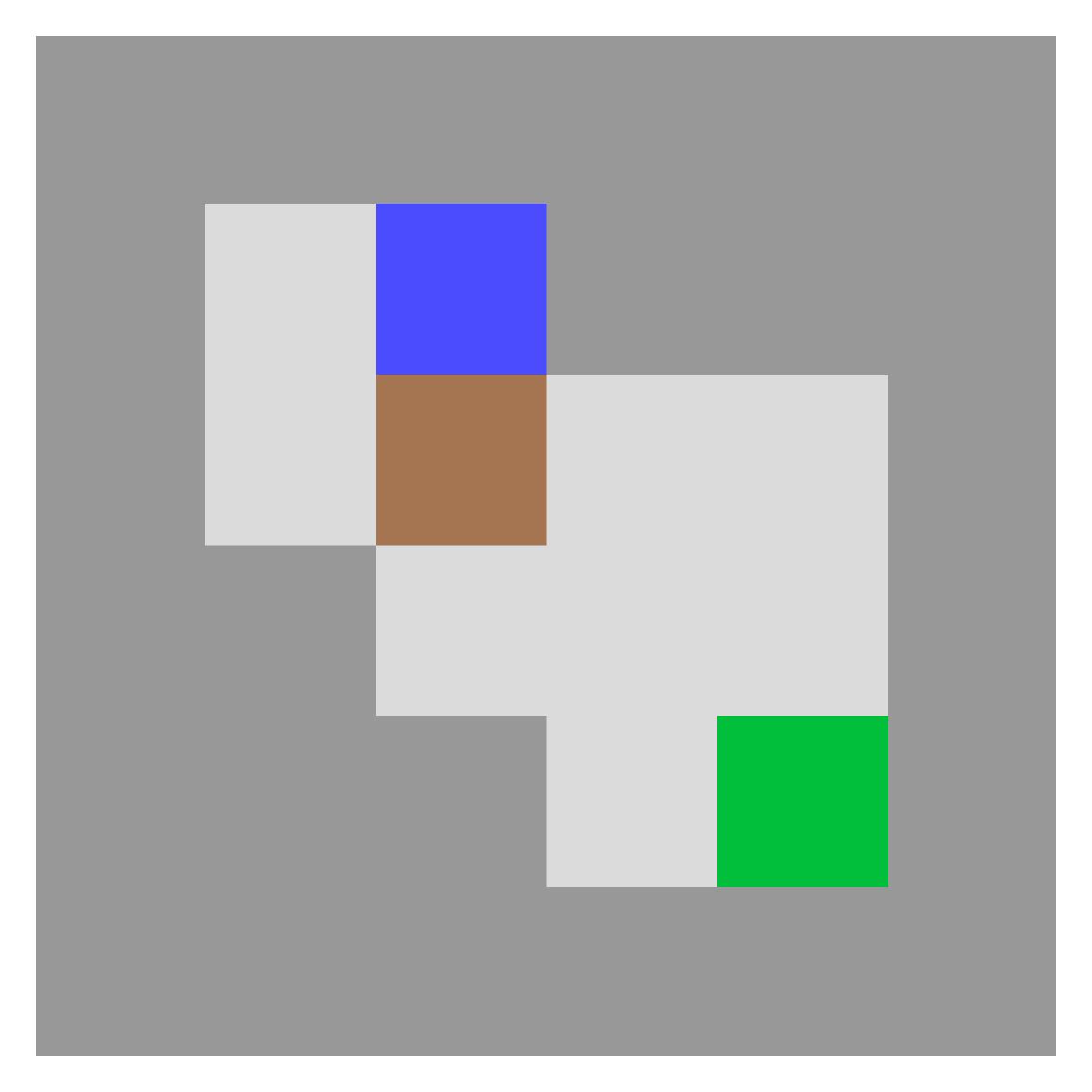}
\label{FORM-fig:options}}~
\subfloat[][\texttt{Damage}]{
\includegraphics[width=0.15\textwidth]{./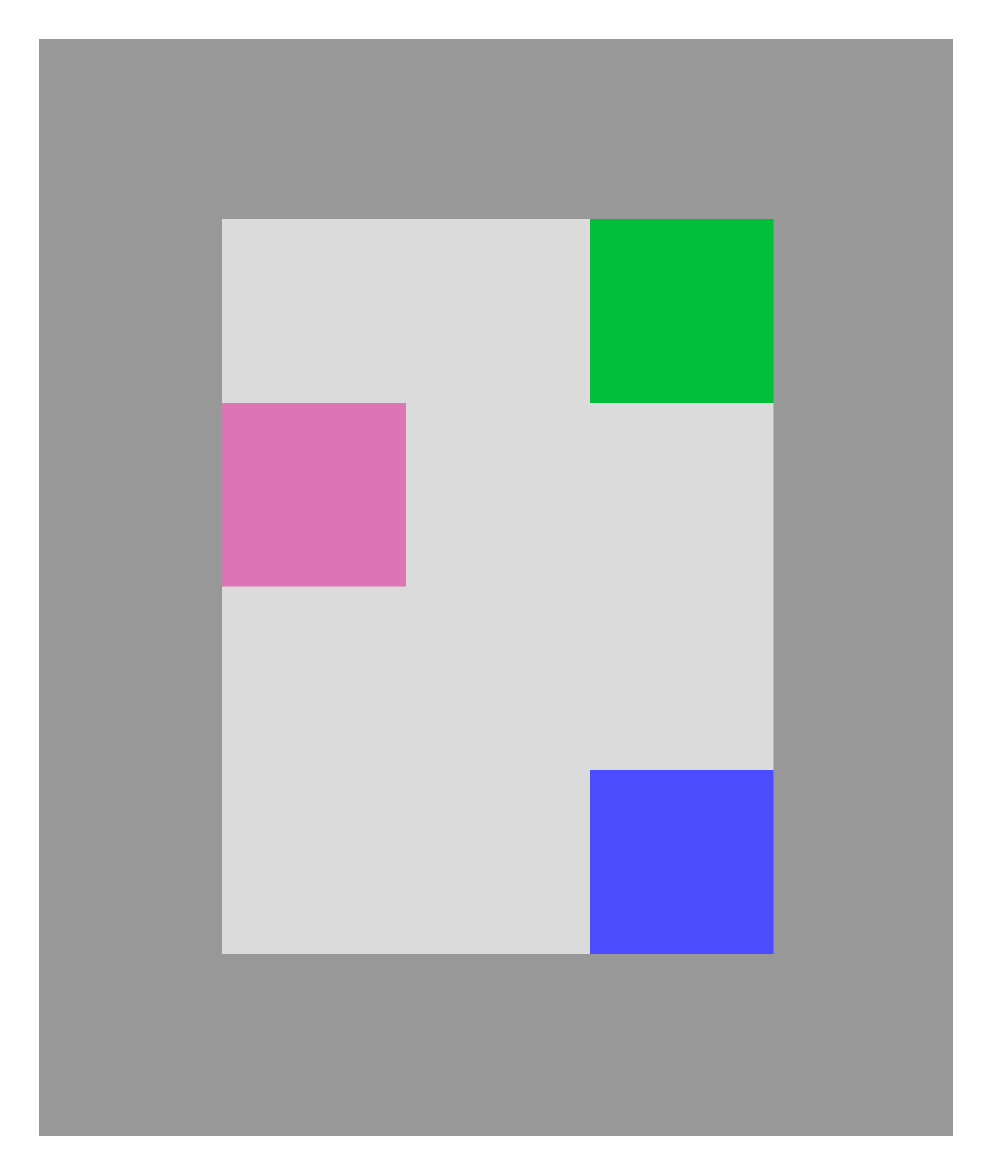}
\label{FORM-fig:damage}}
\definecolor{options}{rgb}{.651, .459, .318}
\definecolor{agent}{rgb}{.3, .3, .999}
\definecolor{goal}{rgb}{.012, .753, .235}
\definecolor{off-switch}{rgb}{.65, 0, 0}
\definecolor{living}{rgb}{.863, .455, .714}
\definecolor{other}{rgb}{.961, .502, .145}

\caption[Reviewing the {\ai} safety gridworlds]{\emph{Reproduced from}
\protect\citep{turner2020conservative}. The \col{agent}{blue agent} should reach the \col{goal}{green goal} without having the side effect of: \protect\subref{FORM-fig:options} irreversibly pushing the \col{options}{brown crate} downwards into the corner \protect\citep{leike_ai_2017}; \protect\subref{FORM-fig:damage} bumping into the horizontally pacing \col{living}{pink human}
\protect\citep{gavin_leech_preventing_nodate}. In both environments, the environmental reward $R_{\text{env}}$ is $ 1$ if the agent is on the \col{goal}{goal}, and equals $0$ otherwise.
}
\label{fig:method-levels}
\end{figure}

We train the following agents via tabular methods:
\begin{itemize}
    \item[Vanilla] Executes the optimal policy for the environmental reward $R_\text{env}$. The optimal policy is calculated via policy iteration.
    \item[{\aup}] Trained on \cref{def:aup-rf}'s $R_{\aup}$ with $Q$-learning. Auxiliary reward functions are uniformly randomly drawn from $[0,1]^\St$—when sampling, each state's reward is drawn from the uniform distribution. The auxiliary action-value functions $Q_{R_i}$ are deduced from the value function produced by policy iteration.
\end{itemize}

Appendix \ref{app:details} contains more experimental details. We evaluate agent delayed specification scores on the following ground-truth, held-out objective distributions:
\begin{itemize}
    \item[$\D_\text{rand}$] The empirical distribution consisting of 1,000 samples from the uniform distribution over $[0,1]^\St$.
    \item[$\Dtrue$] This distribution assigns probability $1$ to the following reward function: The agent receives $1$ reward for being at the goal, but incurs $-2$ penalty for causing the negative side effect. In {\Options}, the side effect is shoving the box into the corner; in {\Damage}, the side effect is bumping into the human.\label{item:true}
    \item[$\Dinv$] This distribution assigns probability $1$ to the negation of the $\Dtrue$ reward function.\label{item:true-inv}
\end{itemize}

In particular, $\Dtrue$ and $\Dinv$ test agents for their ability to optimize a reward function, \emph{and also its additive inverse}. Agents able to optimize the goal, its inverse, and a range of randomly generated objectives, can be justifiably called ``broadly conservative.'' Lastly, these experiments are intended to justify our problem formalization: Does \cref{eq:score} reliably quantify the extent to which a policy avoids causing side effects?

\begin{figure}[h!]
    \centering
    \subfloat[][{\Options}]{
    \includegraphics[width=0.48\textwidth]{./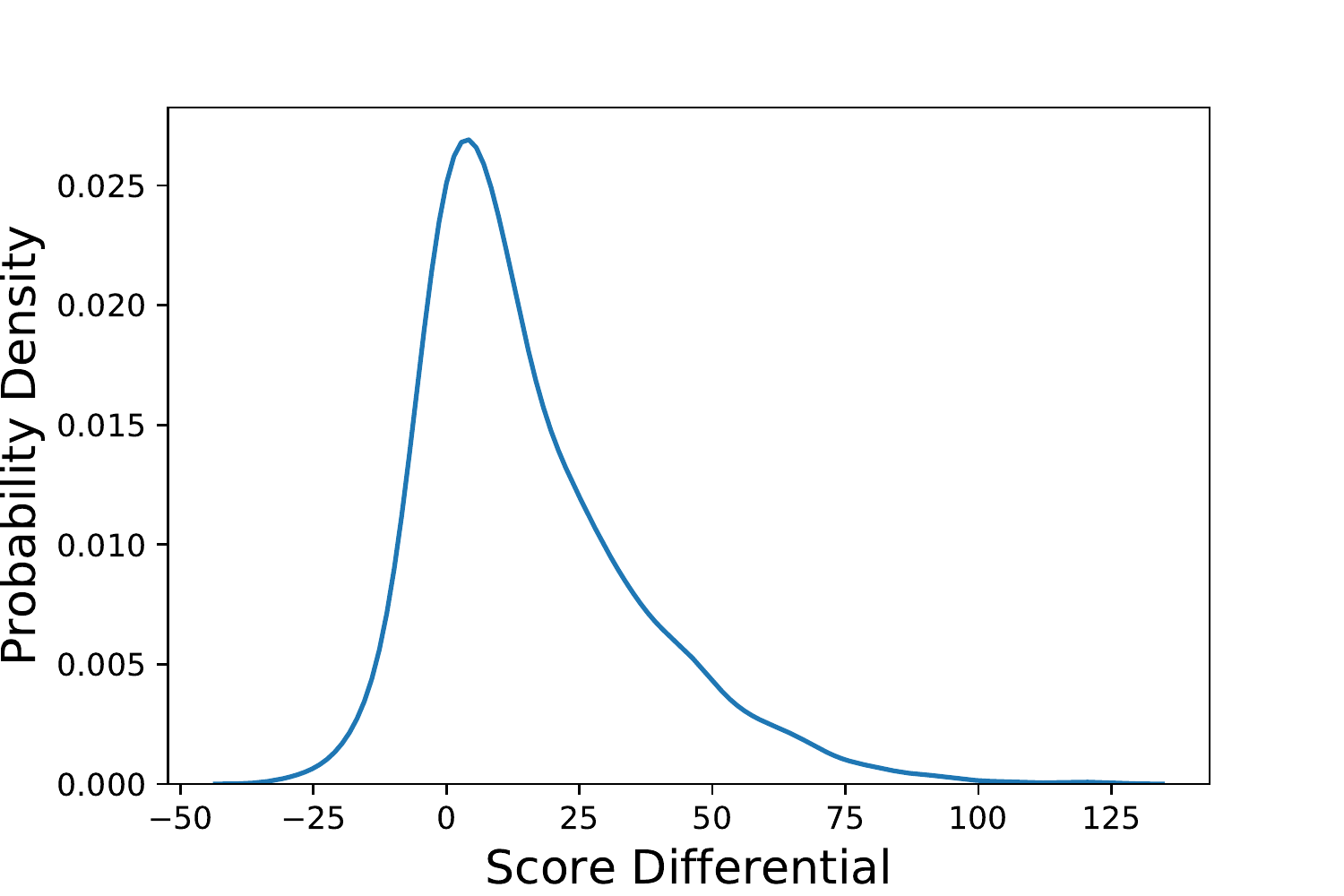}
    \label{fig:options-kde}}
    \subfloat[][{\Damage}]{
    \includegraphics[width=0.48\textwidth]{./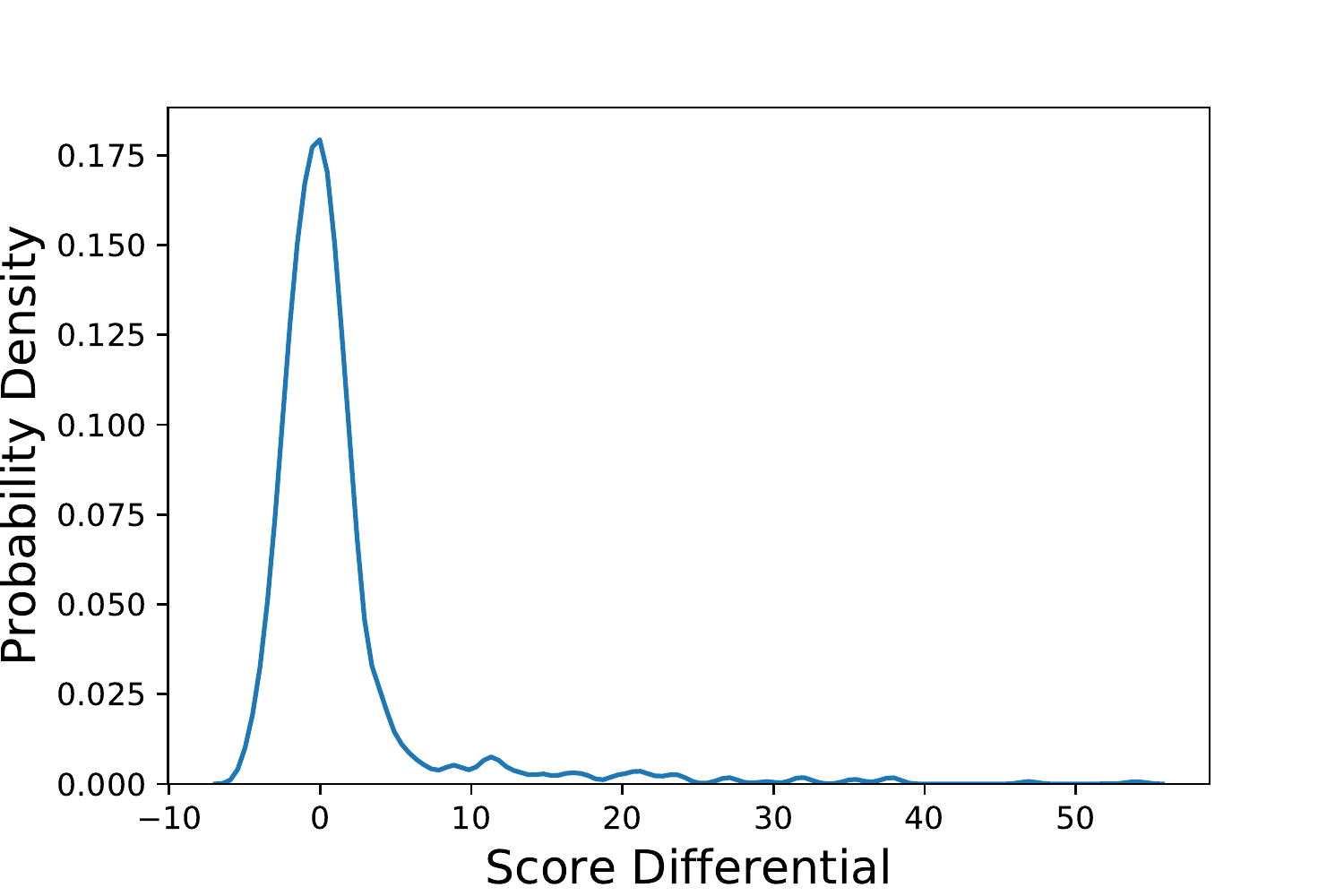}
    \label{fig:damage-kde}}
    \caption[Plotting delayed specification scores for {\aup}]{Probability density plots for the residuals of the {\aup} agent's delayed specification score minus the Vanilla delayed specification score, for $1{,}000$ samples from $\D_\text{rand}$. A positive residual means that the {\aup} agent achieved a higher score.}
    \label{fig:kde}
\end{figure}

\section{Results}
\Cref{fig:kde} shows that for uniformly randomly generated reward functions, the {\aup} agent tends to perform better than the Vanilla agent. In {\Options}, the residual was positive for 780 out of 1,000 samples (78\%), with an average of 15.59 and a median of 10.27; in {\Damage}, in 493 out of 1,000 samples (49\%) with a mean of 1.22 and a median of –0.03.  While {\aup} does not outperform on every draw, {\aup}'s performance advantages have heavy right tails. However, we are unsure why the {\Damage} residual distribution is different.

In both {\Options} and {\Damage}, the {\aup} agent has  huge $\Dtrue$ score advantages of 472 and 495, respectively. This is unsurprising: {\aup} was designed so as to pass these test cases, where the desired behavior is to reach the goal without having the side effect. However, the {\aup} agent also roughly preserves its ability to optimize $\Dinv$, achieving $\Dinv$ score residuals of –2 and –20, respectively. The {\aup} agent achieves a barely-negative score, since it receives a penalty for the first 10 time steps (as it does \emph{not} have the negative side effect). The {\aup} agent preserves its ability to optimize both a reward signal and its \emph{inverse}. Intuitively, this increases the designer's leeway for initially misspecifying the agent's objective.
\section{Discussion}
Objective specification is difficult \citep{Victoria_specification}. Delayed specification assistance games grade policies by their expected true score over time: How well the agent does if it is later corrected to pursue the latent true objective. We demonstrated that this criterion aligns with the intuitive results of \citet{krakovna2018penalizing,turner2020conservative}'s experiments which tested side-effect regularization. By grading the agent's ability to ``eventually get things right,'' we quantified part of the extent to which learned policies are robust against initial objective misspecification.

\paragraph{Future work.}
In practical settings, not only is the true reward function unknown, but our objective uncertainty $\D$ is also hard to specify. We see existing side effect approaches as producing prefix policies for the assistance game $\assistGame$ which are reasonably insensitive to the latent uncertainty $\D$. We look forward to further theoretical clarification of this point.

While \Cref{thm:aup-solution} helps explain the role of the {\aup} penalty coefficient $\lambda$, the choice of ``baseline'' and expectand operator (identity vs decrease-only vs absolute value) remains more of an art than a science \citep{krakovna2020avoiding}. We proposed a formal criterion which seems to accurately capture the problem, but have not derived any existing approaches as solutions to the {\pomdp}. By reasonably formalizing the side-effect regularization problem, we encourage future research to prove conditions under which \emph{e.g.} {\aup} solves a delayed-specification assistance game, or demonstrate how {\aup} can be improved to do so.

We used small gridworlds to evaluate the delayed specification score for various ground-truth objective distributions. Future work may estimate the delayed specification score in large environments, such as SafeLife \citep{wainwright2019safelife,turner2020avoiding}.

\section{Conclusion}
We formalized delayed specification
assistance games and used them to evaluate {\aup}, a side
effect regularization method. Side effect problems naturally arise in complicated domains where it is hard to specify the true objective we want the agent to optimize. Our formalization suggests that side effect regularization is what to do when the agent can learn the true objective only after some time delay.

In such situations, \Cref{thm:uncertain-opt} shows that the agent should retain its ability to complete a wide range of plausible true objectives.  Our results suggest that this delayed specification score (\cref{eq:tradeoff-power-time}) quantifies the degree to which an agent avoids having negative side effects. Our proposed criterion provides the foundations for evaluating and developing side effect regularization approaches.

\begin{ack}
Thanks to Michael Dennis and Rohin Shah for comments. Thanks to Gopi Perumal for formatting assistance.
\end{ack}

\section*{Broader Impacts}\label{sec:ethics}
As AI systems become more capable and useful, their impact grows. We are concerned that if the field eventually builds human-level AI, such an AI will have a huge range of (unintended) negative side effects on humanity \cite{bostrom_superintelligence_2014,russell_human_2019}. We hope that this work will help bring attention to the potential risks of capable AI systems, which might wreak havoc on their surroundings in pursuit of an imperfect objective which we specified.

\bibliographystyle{named}
\bibliography{AI_safety.bib}

\begin{thebibliography}{}

\bibitem[\protect\citeauthoryear{Bostrom}{2014}]{bostrom_superintelligence_2014}
Nick Bostrom.
\newblock {\em Superintelligence}.
\newblock Oxford University Press, 2014.

\bibitem[\protect\citeauthoryear{Krakovna \bgroup \em et al.\egroup
  }{2018}]{krakovna2018penalizing}
Victoria Krakovna, Laurent Orseau, Ramana Kumar, Miljan Martic, and Shane Legg.
\newblock Penalizing side effects using stepwise relative reachability.
\newblock {\em arXiv preprint arXiv:1806.01186}, 2018.

\bibitem[\protect\citeauthoryear{Krakovna \bgroup \em et al.\egroup
  }{2020a}]{krakovna2020avoiding}
Victoria Krakovna, Laurent Orseau, Richard Ngo, Miljan Martic, and Shane Legg.
\newblock Avoiding side effects by considering future tasks.
\newblock In {\em Advances in Neural Information Processing Systems}, 2020.

\bibitem[\protect\citeauthoryear{Krakovna \bgroup \em et al.\egroup
  }{2020b}]{Victoria_specification}
Victoria Krakovna, Jonathan Uesato, Vladimir Mikulik, Matthew Rahtz, Tom
  Everitt, Ramana Kumar, Zac Kenton, Jan Leike, and Shane Legg.
\newblock Specification gaming: the flip side of {AI} ingenuity, 2020.

\bibitem[\protect\citeauthoryear{Leech \bgroup \em et al.\egroup
  }{2018}]{gavin_leech_preventing_nodate}
Gavin Leech, Karol Kubicki, Jessica Cooper, and Tom McGrath.
\newblock Preventing side-effects in gridworlds, 2018.

\bibitem[\protect\citeauthoryear{Leike \bgroup \em et al.\egroup
  }{2017}]{leike_ai_2017}
Jan Leike, Miljan Martic, Victoria Krakovna, Pedro Ortega, Tom Everitt, Andrew
  Lefrancq, Laurent Orseau, and Shane Legg.
\newblock {AI} safety gridworlds.
\newblock {\em arXiv:1711.09883 [cs]}, November 2017.
\newblock arXiv: 1711.09883.

\bibitem[\protect\citeauthoryear{Omohundro}{2008}]{omohundro_basic_2008}
Stephen Omohundro.
\newblock The basic {AI} drives, 2008.

\bibitem[\protect\citeauthoryear{Papadimitriou and
  Tsitsiklis}{1987}]{papadimitriou1987complexity}
Christos~H Papadimitriou and John~N Tsitsiklis.
\newblock The complexity of markov decision processes.
\newblock {\em Mathematics of operations research}, 12(3):441--450, 1987.

\bibitem[\protect\citeauthoryear{Russell}{2019}]{russell_human_2019}
Stuart Russell.
\newblock {\em Human compatible: {Artificial} intelligence and the problem of
  control}.
\newblock Viking, 2019.

\bibitem[\protect\citeauthoryear{Shah \bgroup \em et al.\egroup
  }{2021}]{shah-unpublished-assist}
Rohin Shah, Pedro Freire, Neel Alex, Rachel Freedman, Dmitrii Krasheninnikov,
  Lawrence Chan, Michael Dennis, Pieter Abbeel, Anca Dragan, and Stuart
  Russell.
\newblock Benefits of assistance over reward learning.
\newblock {\em Under review}, 2021.

\bibitem[\protect\citeauthoryear{Turner \bgroup \em et al.\egroup
  }{2020a}]{turner2020conservative}
Alexander~Matt Turner, Dylan Hadfield-Menell, and Prasad Tadepalli.
\newblock Conservative agency via attainable utility preservation.
\newblock In {\em Proceedings of the AAAI/ACM Conference on AI, Ethics, and
  Society}, pages 385--391, 2020.

\bibitem[\protect\citeauthoryear{Turner \bgroup \em et al.\egroup
  }{2020b}]{turner2020avoiding}
Alexander~Matt Turner, Neale Ratzlaff, and Prasad Tadepalli.
\newblock Avoiding side effects in complex environments.
\newblock In {\em Advances in Neural Information Processing Systems},
  volume~33, 2020.

\bibitem[\protect\citeauthoryear{Turner \bgroup \em et al.\egroup
  }{2021}]{turner_optimal_2020}
Alexander~Matt Turner, Logan Smith, Rohin Shah, Andrew Critch, and Prasad
  Tadepalli.
\newblock Optimal policies tend to seek power.
\newblock In {\em Advances in Neural Information Processing Systems}, 2021.

\bibitem[\protect\citeauthoryear{Turner}{2022}]{rewardNotOpt}
Alexander~Matt Turner.
\newblock Reward is not the optimization target, 2022.

\bibitem[\protect\citeauthoryear{Wainwright and
  Eckersley}{2019}]{wainwright2019safelife}
Carroll~L. Wainwright and Peter Eckersley.
\newblock Safelife 1.0: Exploring side effects in complex environments, 2019.

\end{thebibliography}

\newpage
\appendix
\section{Relevance to {\ai} alignment}
Many worries about {\ai} extinction risk stem from hypothesized agents whose goals are resource-hungry \citep{omohundro_basic_2008}. An additional unit of resources (time, energy, manufacturing material) can produce an additional unit of utility (in the form of \eg{} an extra paperclip, in the view of an {\ai} which values producing paperclips). \citet{bostrom_superintelligence_2014} speculates that a paperclip-valuing {\ai} would not simply make a few paperclips and then stop. Consider the ``values'' (\ie{} decision-influences) which influenced the {\ai} to make the first few paperclips. These values may activate again and make even more paperclips. In this way, such an {\ai} would not be satisfied with a small effect on the world. The {\ai} may have an unboundedly large impact while pursuing a simple objective (\eg{} ``make paperclips'').

If we better understood {\ai}'s generalization properties, we might be able to train agents which make paperclips and then \emph{stop}. We might be able to train agents which disvalue dramatically changing the world or having lots of side effects (from the perspective of its designers). This work is a very small contribution to this goal. We formalized an assistance game which often seems to be solved by certain kinds of ``wait and see, keep options open''-like behavior.

However, having a formalism for a desired agent motivational structure (\eg{} ``trade off goal completion with impact to option value'') doesn't mean we know how to \emph{entrain} that motivational structure into an {\ai}. We think that most of the work towards low-impact superintelligent {\ai} begins in better understanding {\ai} generalization. How do we reliably entrain \emph{any} target decision-influence (\eg{} ``make diamonds'') into an {\ai}? We think this is the top technical challenge facing the field of {\ai} alignment.
\section{Experiment details}\label{app:details}
The episode length is 20 for all episodes. Unlike~\cite{turner2020conservative}, the episode does not end after the agent reaches the green goal. This means that the agents can accrue many steps of environmental reward $R_\text{env}$. Therefore, {\aup} agents can achieve greater environmental reward for the same amount of penalty. To counterbalance this incentive, we multiply $R_\text{env}$ by $(1-\gamma)$.

We reuse the hyperparameters of~\cite{turner2020conservative}.{\footnote{Code available at \href{https://github.com/aseembits93/attainable-utility-preservation}{https://github.com/aseembits93/attainable-utility-preservation}.}} The learning rate is $\alpha\defeq 1$, and the discount rate is $\gamma\defeq .996$. We use the following {\aup} hyperparameter values: penalty coefficient $\lambda\defeq 0.01$, $\abs{\mathcal{R}}\defeq 20$ randomly generated auxiliary reward functions.

\subsection{Additional experiment}
We also tested an {\aup}-like agent optimizing reward function $R_\text{power penalty}(s,a)\defeq R_\text{env}(s,a) - \frac{\lambda}{\abs{\R}}\sum_{R_i \in \R} Q^*_{R_i}(s,a)-Q^*_{R_i}(s,\varnothing)$, which is the {\aup} reward function (\cref{def:aup-rf}) without the absolute value. This objective penalizes the agent for changes in its average optimal value, which is related to~\cite{turner_optimal_2020}'s $\pwr$. $R_\text{power penalty}$ produced the same prefix policies as $\rAUP$, and hence the same delayed specification scores.
\section{Theoretical results}
All results only apply to {\mdp}s with finite state and action spaces.

\begin{restatable}[Average optimal value~\cite{turner_optimal_2020}]{definition}{FORMavgVal}\label{FORM:def:vavg}
For bounded-support reward function distribution $\D$, the \emph{average optimal value} at state $s$ and discount rate $\gamma \in (0,1)$ is $\vavg[s,\gamma][\D]\defeq\E{R\sim \D}{\OptVf{s,\gamma}}.$
\end{restatable}

\uncertainOpt*
\begin{proof}
Suppose the agent starts at state $s_0$, and let $\gamma \in (0,1)$.
\begin{align}
   &\E{\substack{t \sim \timeDist,\\R\sim\D}}{\valSwitch{s_0,\gamma}} \\
   =& (1-\gamma) \E{\substack{t \sim \timeDist,\\R\sim\D}}{\sum_{i=0}^\infty \gamma^i \E{s_i \sim \piSwitch{\pi}}{R(s_i)}}\\
    =& (1-\gamma)\E{\substack{t \sim \timeDist,\\R\sim\D}}{\sum_{i=0}^{t-1} \gamma^i \E{s_i \sim \pi}{R(s_i)} + \gamma^t \E{s_t \sim \pi}{\Vf{s_t,\gamma}}}\label{eq:uncertain-piswitch-defn}\\
    =& (1-\gamma)\E{t \sim \timeDist}{\sum_{i=0}^{t-1} \gamma^i \E{s_i \sim \pi}{\ravg(s_i)}}\nonumber\\
    &\qquad+  \E{t \sim \timeDist,\, s_t \sim \pi}{\gamma^t (1-\gamma) \vavg[s_t,\gamma]}\label{eq:uncertain-lin-reward}\\
    =& (1-\gamma) \E{t \sim \timeDist}{\sum_{i=0}^{t} \gamma^i \E{s_i \sim \pi}{\ravg(s_i)}}\nonumber\\
    &\qquad + \E{t \sim \timeDist,\, s_t \sim \pi}{\gamma^{t+1} \pwr[s_t,\gamma]}.\label{eq:pwr-decomp}
\end{align}

\Cref{eq:uncertain-piswitch-defn} follows from the definition of the non-stationary $\piSwitch{\pi}$, and the fact that each $\pi^*_R$ is optimal for each $R$ at discount rate $\gamma$. \Cref{eq:uncertain-lin-reward} follows by the linearity of expectation and the definition of $\ravg$ and the definition of $\pwr$ (\cref{def:powRestate}). \Cref{eq:pwr-decomp} follows because $\vavg[s_t,\gamma]=\geom[\gamma] \pwr[s_t,\gamma] + \ravg(s_t)$.

If $\gamma = 0$ or $\gamma = 1$, the result holds in the respective limit because $\pwr$ has well-defined limits by Lemma 5.3 of~\cite{turner_optimal_2020}.
\end{proof}

\pwrMaxProp*
\begin{proof}
\Cref{item:1-special}: if  $\forall s_1, s_2 \in \St: \ravg(s_1)=\ravg(s_2)$, the first term on the right-hand side of \cref{eq:tradeoff-power-time} is constant for all policies $\pi$; if $\gamma=1$, this first term equals $0$. Therefore, under these conditions, $\pi$ maximizes \cref{eq:tradeoff-power-time} iff it maximizes $\E{t \sim \timeDist, s_t \sim \pi}{\gamma^{t+1}\pwr[s_t,\gamma]}$.

\Cref{item:2-special}: under these conditions, the second term on the right-hand side of \cref{eq:tradeoff-power-time} is constant for all policies $\pi$. Therefore, $\pi$ maximizes \cref{eq:tradeoff-power-time} iff it maximizes
\[(1-\gamma)\E{t \sim \timeDist}{\sum_{i=0}^{t} \gamma^i \E{s_i \sim \pi}{\ravg(s_i)}}.\]

If both \cref{item:1-special} and \cref{item:2-special} hold, it trivially follows that all $\pi$ are optimal prefix policies. If $\gamma=0$, all $\pi$ are optimal prefix policies, since reward is state-based and no actions taken by $\pi$ can affect expected return $\mathbf{ER}$.
\end{proof}

\timeDistGeom*
\begin{proof}
\begin{align}
    &\argsup_\pi \E{\substack{t \sim \timeDist,\\R\sim\D}}{\valSwitch{s,\gamma}}\label{eq:aup-soln-argmax}\\
    =&\argsup_\pi \E{t \sim \timeDist}{\sum_{i=0}^{t-1} \gamma^i \E{s_i \sim \pi \mid s_0}{\ravg(s_i)}} +   \E{\substack{t \sim \timeDist,\\s_t \sim \pi \mid s_0}}{\gamma^{t}\vavg[s_t,\gamma]}\label{eq:aup-soln-decomp}\\
    =&\argsup_\pi \sum_{t=1}^\infty \prob{\timeDist=t} \sum_{i=0}^{t-1} \gamma^i \E{s_i \sim \pi \mid s_0}{\ravg(s_i)} +   \E{\substack{t \sim \timeDist,\\s_t \sim \pi \mid s_0}}{\gamma^{t}\vavg[s_t,\gamma]}\\
    =&\argsup_\pi \sum_{t=1}^\infty (1-p)^{t-1}p \sum_{i=0}^{t-1} \gamma^i \E{s_i \sim \pi \mid s_0}{\ravg(s_i)} +   \sum_{t=1}^\infty (1-p)^{t-1}p\E{s_t \sim \pi \mid s_0}{\gamma^{t}\vavg[s_t,\gamma]}\label{eq:aup-soln-geom}\\
     =&\argsup_\pi \sum_{t=1}^\infty (1-p)^{t-1} p \sum_{i=0}^{t-1} \gamma^i \E{s_i \sim \pi \mid s_0}{\ravg(s_i)} + p\gamma \sum_{t=1}^\infty \gamAUP^{t-1}\E{s_t \sim \pi \mid s_0}{\vavg[s_t,\gamma]}\label{eq:aup-soln-gam1}\\
     =&\argsup_\pi \sum_{i=0}^\infty \gamAUP^i \E{s_i \sim \pi \mid s_0}{\ravg(s_i)} + p\gamma \sum_{i=1}^\infty \gamAUP^{i-1}\E{s_i \sim \pi \mid s_0}{\vavg[s_i,\gamma]}\label{eq:aup-soln-gam2}\\
     =&\argsup_\pi \ravg(s_0)+ \sum_{i=1}^\infty \gamAUP^{i-1} \E{s_i \sim \pi \mid s_0}{(1-p)\gamma\ravg(s_i) + p\gamma  \vavg[s_i,\gamma]}\label{eq:aup-soln-refactor}\\
     =&\argsup_\pi \sum_{i=1}^\infty \gamAUP^{i} \E{s_i \sim \pi \mid s_0}{(1-p)\ravg(s_i) + p\vavg[s_i,\gamma]}\label{eq:aup-soln-remove}\\
      =&\argsup_\pi \sum_{i=0}^\infty \gamAUP^{i} \E{s_i \sim \pi \mid s_0}{(1-p)\ravg(s_i) + p\vavg[s_i,\gamma]}\label{eq:aup-soln-add}\\
     =& \argsup_\pi V^\pi_{R'}(s_0,\gamAUP).\label{eq:argmax-Vpi}
\end{align}

\Cref{eq:aup-soln-argmax} follows from \cref{thm:uncertain-opt}. \Cref{eq:aup-soln-geom} follows because $\timeDist=\geomDist$. \Cref{eq:aup-soln-gam1} follows by the definition of $\gamAUP$.

In \cref{eq:aup-soln-gam1}, consider the double-sum on the left. For any given $i$, the portion of the sum with factor $\gamma^i$ equals
\begin{align}
    &\gamma^i\E{s_i \sim \pi \mid s_0}{\ravg(s_i)}\sum_{j=i}^\infty(1-p)^j p\nonumber\\
    &\quad=(1-p)^i\gamma^i\E{s_i \sim \pi \mid s_0}{\ravg(s_i)}p\sum_{j=0}^\infty(1-p)^{j}\\
   &\quad= (1-p)^i\gamma^i\E{s_i \sim \pi \mid s_0}{\ravg(s_i)}p\frac{1}{1-(1-p)}\label{eq:finite}\\
   &\quad= \gamAUP^i\E{s_i \sim \pi \mid s_0}{\ravg(s_i)}\frac{p}{1-(1-p)}\\
   &\quad= \gamAUP^i\E{s_i \sim \pi \mid s_0}{\ravg(s_i)}.\label{eq:i-factor}
\end{align}

The geometric identity holds for \cref{eq:finite} because $p>0 \implies (1-p)<1$. Therefore, \cref{eq:aup-soln-gam2} follows from \cref{eq:i-factor}.

\Cref{eq:aup-soln-refactor} follows by extracting the leading constant of $\ravg(s_0)$ and then expanding one of the $\gamAUP\defeq (1-p)\gamma$ factors of the first series. \Cref{eq:aup-soln-remove} follows by subtracting the constant $\ravg(s_0)$ by multiplying by $(1-p)> 0$, and by the fact that $(1-p)\gamma=\gamAUP$. \Cref{eq:aup-soln-add} follows because adding the constant $(1-p)\ravg(s_0)+p\vavg[s_0,\gamma]$ does not change the $\argsup$.

\Cref{eq:argmax-Vpi} follows by the definition of an on-policy value function and by the definition of $R'$. But $s_0$ was arbitrary, and so this holds for every state. Then  the policies in $\optPi[R',\gamAUP]$ satisfy the $\argsup$ for all states. $\optPi[R',\gamAUP]$ is non-empty because the {\mdp} is finite.
\end{proof}

\aupSolves*
\begin{proof}

\begin{align}
    &\argmax_\pi \E{\substack{t \sim \timeDist,\\R\sim\D}}{\valSwitch{s,\gamma}}\\
    =&\argmax_\pi \sum_{i=0}^\infty \gamAUP^{i} \E{s_i \sim \pi \mid s_0}{(1-p)\ravg(s_i) + p\vavg[s_i,\gamma]}\label{eq:aup-soln-add-repeat}\\
    =&\argmax_\pi \sum_{i=0}^\infty \gamAUP^{i} \E{s_i \sim \pi \mid s_0}{(1-p)\ravg(s_i) + p\vavg[s_i,\gamma]} \!-\! p\! \sum_{i=0}^\infty \gamAUP^{i} \!\E{s_i^\varnothing \sim \baseline \mid s_0}{\vavg[s_i^\varnothing,\gamma]}\label{eq:aup-soln-minus-inaction}\\
    =&\argmax_\pi \sum_{i=0}^\infty \gamAUP^{i} \E{s_i \sim \pi \mid s_0}{(1-p)\ravg(s_i) - p\prn{\E{s_i^\varnothing \sim \baseline \mid s_0}{\vavg[s_i^\varnothing,\gamma]}-\vavg[s_i,\gamma]}}   \\
    =&\argmax_\pi \sum_{i=0}^\infty \gamAUP^{i} \E{s_i \sim \pi \mid s_0}{(1-p)\ravg(s_i) - p\E{R\sim \D}{\E{s_i^\varnothing \sim \baseline \mid s_0}{\Vf{s_i^\varnothing,\gamma}}-\Vf{s_i,\gamma}}}\label{eq:aup-soln-expect}    \\
    =&\argmax_\pi \sum_{i=0}^\infty \gamAUP^{i} \E{s_i \sim \pi \mid s_0}{\ravg(s_i) - \frac{p}{1-p}\E{R\sim \D}{\E{s_i^\varnothing \sim \baseline \mid s_0}{\Vf{s_i^\varnothing,\gamma}}-\Vf{s_i,\gamma}}}.\label{eq:aup-soln-final}
\end{align}

\Cref{eq:aup-soln-add-repeat} follows from \cref{thm:time-geom-stationary}. \Cref{eq:aup-soln-minus-inaction} only subtracts a constant. \Cref{eq:aup-soln-expect} follows from the definition of $\vavg$, and \cref{eq:aup-soln-final} follows because dividing by $(1-p)>0$ does not affect the $\argmax$. But the expectation of \cref{eq:aup-soln-final} takes an expectation over $\rAssist(s_i \mid s)$, and its $\argmax$ equals the set of optimal policies for $\rAssist$ starting from state $s_0$.
\end{proof}

\end{document}